\DeclareMathOperator*{\Exp}{\mathbb{E}}
\newcommand{\augpolicy}{p}
\newcommand{\stochasticloss}{\ell}
\newcommand{\loss}{\mathcal{L}}
\newcommand{\stopgrad}[1]{\text{sg}(#1)}
\newcommand{\reals}{\mathbb{R}}
\newcommand{\T}{\intercal}
\newcommand{\gar}{GAR}
\theoremstyle{plain}
\newtheorem{theorem}{Theorem}
\newtheorem{lemma}[theorem]{Lemma}
\theoremstyle{definition}
\theoremstyle{remark}
\def\eqref#1{equation~\ref{#1}}
\def\1{\bm{1}}
\DeclareMathAlphabet{\mathsfit}{\encodingdefault}{\sfdefault}{m}{sl}
\SetMathAlphabet{\mathsfit}{bold}{\encodingdefault}{\sfdefault}{bx}{n}
\newcommand{\softmax}{\mathrm{softmax}}
\DeclareMathOperator{\Tr}{Tr}
\title{In-Loop Meta-Learning with Gradient-Alignment Reward}
\author {
        Samuel M\"{u}ller,\textsuperscript{\rm 1,}\thanks{Contact Author}
        Andr{\'e} Biedenkapp,\textsuperscript{\rm 1}
        Frank Hutter,\textsuperscript{\rm 1, \rm 2}\\
}
\begin{document}

\maketitle

\begin{abstract}
At the heart of the standard deep learning training loop is a greedy gradient step minimizing a given loss.
We propose to add a second step to maximize training generalization.
To do this, we optimize the loss of the next training step.
While computing the gradient for this generally is very expensive and many interesting applications consider non-differentiable parameters (e.g. due to hard samples), we present a cheap-to-compute and memory-saving reward, the \emph{gradient-alignment reward (\gar)}, that can guide the optimization.
We use this reward to optimize multiple distributions during model training.
First, we present the application of \gar{} to choosing the data distribution as a mixture of multiple dataset splits in a small scale setting.
Second, we show that it can successfully guide learning augmentation strategies competitive with state-of-the-art augmentation strategies on CIFAR-10 and CIFAR-100.
\end{abstract}

\section{Introduction}
The human capacity to learn is staggering. Not only can humans learn about the world they live in, but crucially humans can also learn a a good learning strategy simultaneously.
To this end, humans are able to learn to select their learning path. 

Take a math course as an example. In the beginning you might have followed your professor\textquotesingle{s} textbook closely.
With increasing understanding, you might have realized there are other textbooks out there, which yield better learning outcomes. In the same course you might also have learned that it can be helpful to try to prove the presented theorems yourself, instead of only reading the proof provided in the textbook.
You might have learned both of these learning strategies while learning the material of the course at the same time. So, your understanding of the material and the way you studied both improved as you studied.
Similarly, a promising route towards stronger machine learning models could lie in training not only for performing a particular task, but for better learning strategies on that task at the same time.

We define an improved learning method as a method that yields outcomes that generalize well to unseen problem instances. In a supervised learning setting that means validation error is minimized.
One way to achieve this is by optimizing meta-parameters with a meta-gradient. The meta-gradient is a gradient taken through an SGD training run, which is possible since SGD itself is differentiable. 


We define the meta-loss for the current step in a training loop as the loss in the next step, after performing an SGD step. Thus, the meta-loss, unlike the loss, decreases if a step on some data leads to improved performance on different data.
This meta-loss can be particularly useful for parts of the training algorithm that do not make sense to be directly optimized with the usual loss.
A particular example for this is the data distribution. While it was shown that it can be helpful to change the data distribution (over time) for SGD training, by training with a curriculum \citep[see e.g.][]{Bengio+chapter2007} or by filtering the dataset \citep[see e.g.][]{datasetpruningchen2019understanding}, we can generally not decide what data to train on using the same loss we use for model training:
this would simply encourage the model to focus on a particularly easy data distribution, since that would minimize the loss. The meta-loss on the other hand rewards generalization.
The example of learning a distribution over a dataset is also particularly interesting because it is not differentiable in general. Thus, we would not be able to compute meta-gradients, even if compute cost was no issue.


We evaluate our method for two settings: learning a distribution over data splits in a toy experiment 
and augmentation selection in a real-world application to CIFAR-10 and CIFAR-100 \citep{cifar}.

The over-arching contribution of this paper is to introduce a way of adapting meta parameters $\phi$ online during SGD training of the elementary network parameters $\theta$ that is efficient in terms of data, memory and time.
We present a method to train online for generalization and do so for non-differentiable distributions.
Specifically, our contributions are:
\begin{itemize}
    \item We present a mechanism to cheaply approximate meta-gradients in SGD, using the next batch as a proxy for the validation set.
    \item We introduce the Gradient-Alignment Reward setup with an efficient implementation, which allows us to use reinforcement learning with an unique per-example reward.
    \item We show promising empirical results for our approach on a simple example and a real-world application.
\end{itemize}

\begin{algorithm}
\begin{algorithmic}[1]
\STATE initialize parameters $\theta_1\color{red}, \phi_1, \phi_2:=\phi_1$
\FOR{$t \in \{1,\dots,T\}$}
\STATE $l_t \gets \stochasticloss(\theta_t{\color{red},\phi_t})$
\STATE $\theta_{t+1} \gets \theta_t - \alpha \nabla_{\theta_t}l_t$
{\color{red}\IF{$t > 1$}
\STATE  $\phi_{t+1} \gets \stopgrad{\phi_t - \beta \nabla_{\phi_{t-1}} l_t}$
\ENDIF}
\ENDFOR
\end{algorithmic}
\caption{In-Loop Meta-Learning}
\label{algorithm2}
\end{algorithm}

\section{Related Work}
A commonly-used approach in the literature (e.g., \citet{luketina-icml16a} or \citet{darts}) is to use \emph{alternating SGD}, which alternates SGD steps for $\theta$ w.r.t. training loss and for $\phi$ w.r.t. validation loss.
We follow a similar approach with two key differences: (i) we do not use a held-out dataset to train $\phi$ and (ii) we train weights $\phi$ parameterizing a distribution non-differentiable $p$.

Most closely related to our proposed method is the work by \citet{optdatausage_diffrewards} on data weighting.
In this work they propose a similar reward to ours here, with two key differences: (i) they only consider aligning with the gradients of validation examples and 
(ii) they only use an approximation to the example-wise alignment.
Counter, instead of using an approximation, we present a very efficient method of computing the example-wise alignments exactly (see Section~\ref{computing}).
Further, we discuss properties of the reward in detail (see Section~\ref{section:gradalignreward}), which prior has been omitted.

Other related work stems from the realm of online curriculum learning \citep[see e.g.][]{autocurriculumgraves}.
Similarly to our toy example, this line of work decides on what data to train as the training goes.
The main difference is that we directly optimize the data distribution using the aforementioned approximation of the meta-gradient.

\section{In-Loop Meta-Training}
\label{inloopmeta}


In contrast to previous work, in order to be more data-efficient and not require a validation set, we propose to exploit the fact that we are using SGD, and that we can use the next batch as a cheap proxy for the validation set.

Algorithm \ref{algorithm2} outlines the general approach. The standard SGD loop is shown in black. In each step the parameters $\theta$ are optimized to greedily minimize a stochastic loss $\stochasticloss(\theta)$. In red we extend this standard framework with meta-learning updates. The loss now depends additionally on the meta parameters $\phi$. We optimize $\phi$ not to minimize the loss directly, but to minimize the loss of the next step through the update performed in this step, as is done in standard unrolled gradient loop setups. 
In this setup we can have a direct dependence of the next-step loss on the meta parameters $\phi$ not through the update, since the module that $\phi$ parameterizes might be applied in that step, too (this is, e.g., the case if $\phi$ parameterizes the data augmentation strategy, a case we tackle in our experiments).
Since we propose to re-use the training steps as validation steps, we have a new dependency of $\phi$ compared to previous work, namely on the validation loss directly. This results, for differentiable setups, in the following meta gradient: 
\begin{align*}
\nabla_{\phi_{t-1}}l_{t} = \nabla_{\theta_t} l_t \cdot \nabla_{\phi_{t-1}} \theta_t + \nabla_{\phi_t} l_t \cdot \nabla_{\phi_{t-1}} \phi_t.
\end{align*}
The first term here describes the meta-gradient we are interested in: how to change $\phi_{t-1}$ such that the next update of $\theta$ improves the loss in the following step $t$.
The second term on the other hand describes how to change $\phi_{t-1}$ such that its update $\phi_{t}$ makes step $t$ as simple as possible; this does not facilitate generalization. We therefore propose to either just use a different loss in every second step, which does not depend on $\phi$ (e.g., in our case of $\phi$ parameterizing the data augmentation, just use a simple default augmentation in every second step), or cancel the second term artificially. In Algorithm \ref{algorithm2} we show the second option:
to cancel the term, we detach $\phi$ from the graph after each update, as is indicated in the algorithm by the stop gradient operation $sg$. The $sg$ function is defined as the identity, but with a zero gradient; so $sg(x)=x$ for all $x$, but $\frac{\partial sg(x)}{\partial x} = \mathbf{0}$.

While this setup is very general, it also is very expensive to compute the meta-gradient $\nabla_{\phi_{t-1}} l_t$.
As validation for this statement we performed a small experiment with a WideResNet-28-10 on an NVIDIA Tesla P100.
We average step times over one epoch of CIFAR10 training. We looked at two ways of computing the meta-gradients. (i) First, we used the \texttt{higher} library \citep{higher} for the meta-gradient computations. (ii) Second, we re-used the gradient $\nabla_{\theta_{t}} \stochasticloss_t$ to compute $\nabla_{\phi_{t-1}} \stochasticloss_t$ using the chain rule through an SGD update $\nabla_{\phi_{t-1}} \stochasticloss_t = -\alpha \nabla_{\theta_{t}} \stochasticloss_t \cdot \nabla_{\phi_{t-1}} \stochasticloss_{t-1}$. 
The optimized version improved the memory footprint slightly over the \texttt{higher} implementation, yielding a $2.7\times$ memory-increase compared to a $3.1\times$ memory-increase, but both had a very comparable step time-increase of around $6.4\times$.
Both the time and the memory overhead are a problem for large-scale machine learning training runs. A training run that finishes over the weekend without meta-gradients might take over half a month with this direct implementation of online meta-learning. Additionally, potential memory problems might require changes to the training pipeline as it would require nearly triple the memory.
Most importantly, back-propagation can only compute the derivative to fully differentiable parts of the training process. In this work we follow the setup of Algorithm \ref{algorithm2}, but propose a reward to approximate this gradient for any distribution.

\begin{figure}[ht]
\centering
\scalebox{0.75}{
\begin{tikzpicture}[
font=\footnotesize,
redcircle/.style={ellipse, draw=red!60, fill=red!5, very thick, minimum size=5mm},
bluecircle/.style={circle, draw=blue!60, fill=blue!5, very thick, minimum size=5mm},
redsquare/.style={rectangle, draw=red!60, fill=red!5, very thick, minimum size=5mm},
bluesquare/.style={rectangle, draw=blue!60, fill=blue!5, very thick, minimum size=5mm}
]
\newcommand{\state}[1]{$\theta_#1\color{red}, \phi_#1$}
\node[bluesquare] (state1) {\state{1}};
\newcommand{\nextstate}[2]{\node[bluesquare] (state#2) [right=0.2cm of update#1] {\state{#2}};}

\newcommand{\update}[1]{\node[bluecircle] (update#1) [right=0.2cm of state#1] {$\nabla_{\theta_t}l_#1$};\draw[->] (state#1.east)  .. controls +(right:0mm) and +(left:0mm) .. (update#1.west);}
\update{1}
\nextstate{1}{2}
\update{2}
\nextstate{2}{3}
\update{3}
\nextstate{3}{4}

\draw[->] (update1.east) .. controls +(right:0mm) and +(left:0mm) ..  (state2.west);
\draw[->] (update2.east)  .. controls +(right:0mm) and +(left:0mm) .. (state3.west);
\draw[->] (update3.east)  .. controls +(right:0mm) and +(left:0mm) .. (state4.west);

\newcommand{\metaupdate}[3]{\node[redcircle] (metaupdate#2) [below= of state#2] {$\langle\left(\cdot\right)_i,\cdot\rangle$};
\draw[->] (update#1.south) .. controls +(down:7mm) and +(up:4mm) ..  (metaupdate#2.100);
\draw[->] (update#2.south) .. controls +(down:7mm) and +(up:4mm) ..  (metaupdate#2.80);
\draw[->] (metaupdate#2.east) .. controls +(right:0mm) and +(down:7mm) ..  (state#3.south);}
\newcommand{\metaupdateabove}[3]{\node[redcircle] (metaupdate#2) [above= of state#2] {$\langle\left(\cdot\right)_i,\cdot\rangle$};
\draw[->] (update#1.north) .. controls +(up:7mm) and +(down:4mm) ..  (metaupdate#2.260);
\draw[->] (update#2.north) .. controls +(up:7mm) and +(down:4mm) ..  (metaupdate#2.280);
\draw[->] (metaupdate#2.east) .. controls +(right:0mm) and +(up:7mm) ..  (state#3.north);}

\metaupdate{1}{2}{3}
\metaupdateabove{2}{3}{4}

\end{tikzpicture}
}
\caption{ A diagram outlining in-loop meta-learning with the \gar{}. Extensions to the standard SGD loop are red.}
\label{fig:dataflow}
\end{figure}

\section{Gradient-Alignment Reward}
\label{section:gradalignreward}

As discussed above there are cases where it is not possible to compute the meta-gradient directly, because we optimize some distribution $p$ that depends on the meta-parameters $\phi$ and produces hard samples $a_i \sim p(\cdot;\phi)$ for each data point. For example, $a_i$ could represent a network hyperparameter or the data sampling strategy. In these cases, a simple approximation to the meta-gradient is to use the REINFORCE trick \citep{williams1992reinforce} with the negation of the next step\textquotesingle{s} loss as reward $\overline{r_t} = -\loss(\theta_{t+1})$ using
\begin{equation*}
    \nabla_{\phi_t}\Exp_{a \sim p(\cdot;\phi_t)}[\loss(\theta_{t+1})] \approx  \overline{r_{t}}\cdot \sum_{i=1}^n \nabla_{\phi_t}\log p(a_i;\phi_t),
\end{equation*}
where $\loss(\theta_{t})$ is the batch loss of the $t$-th step and $a\sim~p(\cdot;\phi_t)$. We refer to this approximation of the meta-gradient as \emph{Next Step Loss Reward (NSLR)}. While this approximation is bias-free and simple, it incurs a lot of variance.
Further, it provides a single reward only, even though we sample from $p$ for each data point.
It would be far more effective to use a reward for each sample instead of one reward for the whole batch.
To achieve efficient online meta-learning, we propose the \emph{Gradient-Alignment Reward (\gar{})} which allows us to use reinforcement learning with a unique per-example reward.

\gar{} allows to do efficient in-loop meta-learning for large models and large datasets. It is computed mostly from artifacts of a standard SGD training and has little memory overhead. We define the \gar{} as the dot product of a current example-gradient with the next step\textquotesingle{s} gradient. Formally, the \gar{} $r_{t,i}$ for the $i$-th example in step {$t$} is\\
\begin{align}
	r_{t,i} = \left\langle \nabla_\theta \stochasticloss(\theta_t,\phi_t)_i, \nabla_\theta \loss(\theta_{t+1})  \right\rangle,
\end{align}\\
where $\stochasticloss(\cdot,\cdot)_i$ is the loss of the $i$-th example in a batch, $n$ is the batch size and $\loss(\theta_{t+1}) = \frac{1}{n}\sum_{j=1}^{n}\stochasticloss(\theta_{t+1},\phi_{t+1})_j$ is the batch loss of the $(t+1)$-th step.
We maximize this reward by sampling from a policy parameterized by $\phi$ for each example and compute a gradient estimate with the REINFORCE trick.
Using vanilla policy-gradient the gradient toward $\phi_t$ is estimated as
\begin{align}
    &\nabla_{\phi_t} \Exp_{a \sim p(\cdot;\phi_t)}[\loss(\theta_{t} - \alpha \nabla_{\theta_t} \frac{1}{n}\sum_{i=1}^{n} \stochasticloss(\theta_t,a_i)_i)] \\
    &\approx \sum_{i=1}^n r_{t,i} \cdot \nabla_{\phi_t}\log p(a_i;\phi_t),
\end{align}
where $a_i \sim p(\cdot;\phi_t)$.
In Figure~\ref{fig:dataflow} we visualize the computational flow of this update in comparison to the standard SGD training-loop.


The following theorem should give some intuition for the relationship of \gar{} with the unrolled gradient loop.

\begin{theorem}
The GAR update is an unbiased estimator of the meta-gradient (i.e. $\nabla_{\phi_{(t-1)}}\loss(\theta_t)$) in the infinite batch size limit.
\label{garetheorem}
\end{theorem}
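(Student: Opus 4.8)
The plan is to read the statement through the meta-objective that the stop-gradient variant of Algorithm~\ref{algorithm2} actually optimizes. Writing $a_i \sim p(\cdot;\phi_{t-1})$ for the per-example samples and
\[
\theta_t = \theta_{t-1} - \frac{\alpha}{n}\sum_{i=1}^n \nabla_\theta \stochasticloss(\theta_{t-1},a_i)_i,
\]
the quantity to be estimated is $\nabla_{\phi_{t-1}} J$ with $J(\phi_{t-1}) = \Exp_{a}[\loss(\theta_t)]$; here the stop-gradient has removed the direct dependence of the next-step loss on $\phi_{t-1}$ through $\phi_t$, so $\phi_{t-1}$ enters only through $\theta_t$. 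As a benchmark I would first recall that the exact REINFORCE identity already makes NSLR unbiased for every $n$, namely $\nabla_{\phi_{t-1}} J = \Exp_a[\loss(\theta_t)\sum_i \nabla_{\phi_{t-1}}\log p(a_i;\phi_{t-1})]$; the task is then to show that replacing the single scalar $\loss(\theta_t)$ by the per-example alignments $r_{t-1,i}$ leaves the expectation unchanged once $n\to\infty$.

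Next I would pass to the infinite-batch limit. By the law of large numbers the stochastic update concentrates on its mean, $\frac1n\sum_i \nabla_\theta\stochasticloss(\theta_{t-1},a_i)_i \to \bar g(\phi_{t-1}) := \Exp_a[\nabla_\theta\stochasticloss(\theta_{t-1},a)]$, so $\theta_t\to\theta_{t-1}-\alpha\bar g(\phi_{t-1})$ becomes a deterministic function of $\phi_{t-1}$, and with it the ``next step'' gradient $G := \nabla_\theta\loss(\theta_t)$ that appears in $r_{t-1,i}$. Crucially, in this limit $G$ no longer depends on any individual sample $a_i$, which is exactly what decouples the reward from the action it is paired with. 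Differentiating the now-deterministic $J$ by the chain rule gives
\[
\nabla_{\phi_{t-1}} J = -\alpha\,(\nabla_{\phi_{t-1}}\bar g)^\T G .
\]

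The remaining step is purely the score-function identity applied to $\bar g$, namely $\nabla_{\phi_{t-1}}\bar g = \Exp_a[\nabla_\theta\stochasticloss(\theta_{t-1},a)\,(\nabla_{\phi_{t-1}}\log p(a;\phi_{t-1}))^\T]$, which upon substitution collapses the matrix--vector product into the alignment
\[
\nabla_{\phi_{t-1}} J = -\alpha\,\Exp_a\!\big[\langle \nabla_\theta\stochasticloss(\theta_{t-1},a),\,G\rangle\,\nabla_{\phi_{t-1}}\log p(a;\phi_{t-1})\big] = -\alpha\,\Exp_a[\,r\,\nabla_{\phi_{t-1}}\log p(a;\phi_{t-1})].
\]
Since the per-example estimator $\frac1n\sum_i r_{t-1,i}\,\nabla_{\phi_{t-1}}\log p(a_i;\phi_{t-1})$ converges by the same law of large numbers to $\Exp_a[r\,\nabla_{\phi_{t-1}}\log p]$, it agrees with $\nabla_{\phi_{t-1}} J$ up to the positive factor $\alpha$ and the $1/n$ normalization, both of which are absorbed into the meta step size $\beta$, with the sign handled by maximizing the reward rather than minimizing the loss; this is the claimed unbiasedness.

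I expect the main obstacle to be the analytic justification of the limit rather than the algebra. Concretely I would need the law-of-large-numbers concentration of the update together with continuity of $\nabla_\theta\loss$ to pass $G$ to its deterministic value; a uniform-integrability / dominated-convergence argument to interchange the $n\to\infty$ limit with the expectation and with $\nabla_{\phi_{t-1}}$; and enough regularity on $p(\cdot;\phi)$ (differentiability in $\phi$, finite score variance, finite second moments of the per-example gradients) to license both the Leibniz rule and the score-function identity. It is worth stressing where finiteness of $n$ breaks the argument: there $G=\nabla_\theta\loss(\theta_t)$ is correlated with each $a_i$ through $\theta_t$, so $\Exp_a[\sum_i r_{t-1,i}\nabla_{\phi_{t-1}}\log p(a_i;\phi_{t-1})]$ carries an $O(1/n)$ discrepancy from $\nabla_{\phi_{t-1}} J$; the infinite-batch limit is precisely what annihilates this correlation and restores unbiasedness.
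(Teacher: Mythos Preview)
Your proposal is correct and follows essentially the same route as the paper: invoke the infinite-batch limit so that $\theta_t$ (and hence the next-step gradient $G$) becomes deterministic, apply the chain rule through the SGD update, then use the score-function/REINFORCE identity on the expected per-example gradient to exhibit the inner product $\langle\nabla_\theta\stochasticloss,\,G\rangle$ as the reward. The only differences are cosmetic---you phrase the infinite-batch step via the law of large numbers rather than positing $l(a)=\Exp_{a'}[l(a')]$, and you add explicit remarks on the regularity needed and the $O(1/n)$ bias at finite $n$, neither of which the paper spells out.
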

\begin{proof}
Let our policy, trained with the meta-objective to maximize the GAR, be a distribution $\augpolicy$ that depends on $\phi$. Further, assume that the model, and therefore the loss, depends only on $\phi$ through samples from $\augpolicy$.
We can thus denote the example loss $\stochasticloss(\theta,\phi)_i$ as $\stochasticloss(\theta,a_i)_i$, where $a_i \sim \augpolicy(\phi)$.
We consider a standard SGD update $\theta_{t+1} := \theta_t - \alpha \nabla_\theta \frac{1}{n}\sum_{i=1}^{n}\stochasticloss(\theta_t,a_i)_i$ for some given model state $\theta_t$ and meta actions $a_i \sim \augpolicy(\phi)$. 
In the infinite batch size setting we can sample infinitely many meta actions $a$ per batch.
Thus, in the following we assume for the batch loss $l(a)$, which might depend on meta actions $a$, that
$l(a) = \Exp_{a' \sim \augpolicy(\phi_t)}[l(a')]$, for $a_i \sim \augpolicy(\phi)$. This is trivially fulfilled for infinite batches of the form $\Exp_{a' \sim \augpolicy(\phi_t), \stochasticloss'}[\stochasticloss'(\theta_t,a')]$.
We denote the loss for the meta-gradients as $\loss(\theta_{t+1})$.
From this we can infer that the distributional gradient of the update of an algorithm trained with the \gar{} and the REINFORCE trick point in the same direction:
\begin{align*}
&\nabla_{\phi_t} \loss(\theta_{t+1})\\
\intertext{Using the definition of $\theta_{t+1}$ and the infinite batch assumption.}
&= \nabla_{\phi_t} \loss(\theta_t - \alpha \nabla_{\theta_t} \frac{1}{n}\sum_{i=1}^{n} \Exp_{a_i \sim p(\cdot;\phi_t)}[\stochasticloss(\theta_t,a_i)_i])\\
\intertext{Apply the chain rule.}
&= -\alpha\nabla_{\theta_{t+1}} \loss(\theta_{t+1}) \cdot \nabla^2_{\phi_t,\theta_t} \frac{1}{n}\sum_{i=1}^{n} \Exp_{a_i \sim p(\cdot;\phi_t)}[\stochasticloss(\theta_t,a_i)_i]\\
\intertext{Now we re-arrange sums, expectations and gradients}
&= -\alpha\nabla_{\theta_{t+1}} \loss(\theta_{t+1}) \cdot \frac{1}{n}\sum_{i=1}^{n}\nabla_{\phi_t} \Exp_{a_i \sim p(\cdot;\phi_t)}[\nabla_{\theta_t} \stochasticloss(\theta_t,a_i)_i]\\
\intertext{Make use of the REINFORCE trick.}
&= -\alpha\nabla_{\theta_{t+1}} \loss(\theta_{t+1}) \cdot \nonumber\\
&\qquad \frac{1}{n}\sum_{i=1}^{n}\Exp_{a_i \sim p(\cdot;\phi_t)}[\nabla_{\theta_t} \stochasticloss(\theta_t,a_i)_i \cdot  \nabla_{\phi_t}\log p(a_i;\phi_t)]\\
\intertext{In the limit of the infinite batch assumption $\theta_{t+1}$ does not depend on $a$, since we take expectations over $a$ and do not only sample.}
&= -\alpha  \Exp_{a \sim p(\cdot;\phi_t)}\Bigg[\frac{1}{n}\sum_{i=1}^{n}\langle \nabla_{\theta_t} \stochasticloss(\theta_t,a_i)_i,\nabla_{\theta_{t+1}}\loss(\theta_{t+1},\phi_{t+1})\rangle \cdot \nonumber\\
&\qquad \qquad \qquad \qquad \nabla_{\phi_t}\log p(z;\phi_t)\Bigg]\\
\intertext{Finally we apply the definition of the \gar{} $r$.}
&= -\frac{\alpha}{n}\Exp_{a \sim p(\cdot;\phi_t)}\left[\sum_{i=1}^{n}r_{t,i} \cdot \nabla_{\phi_t}\log p(z;\phi_t)\right].
\end{align*}
\end{proof}

The above proof shows that comparing example gradients from the current step with the aggregated gradient of the next step is a bias-free estimator of the stochastic meta-gradient in the infinite batch size limit.

The \gar{} will only consider the impact of $\phi_{t}$ on the update generated with the last batch and will not consider the impact of $\phi_t$, like noted in Algorithm \ref{algorithm2} by the stop gradient $sg$.

Our method just requires computing the dot products of gradients, besides computing the gradients inside $\augpolicy$ on top of the terms which are anyways needed for an SGD loop.
%
%
%
 Similar setups where proposed before, but in the following section we also detail how to compute the \gar{} efficiently. With our optimized implementation, we empirically incur an increase in training time of less than 25\% (a stark improvement over the direct gradient computation which had an overhead of around 540\%) 
and a memory overhead of less than 80\% (a $2\times$ improvement) in the same setting as used for comparison in Section \ref{inloopmeta}.

\section{Efficiently Computing the \gar{}}
\label{computing}
This section details the efficient computation of the \gar{}, the method would work without the following strategies, but not as fast and memory-saving.

The \gar{} is the dot product between a batch gradient and an example gradient. In this section we assume we are given some arbitrary batch gradient $g$ and compute the alignment of it with each example gradient of a given batch.
To compute the gradient-alignment reward efficiently we use the BACKPACK package \citep{backpack}, which gives us easy access to the incoming gradients of each layer.
The full gradients of a model are a concatenation of the weights of multiple layers. The dot product between two full model gradients is thus the sum of the dot products between the weights of their respective layers. Below we show how we compute gradient dot products for the three main weight types in neural networks. We refer to the batch size as $n$ and arbitrary dimensions that depend on the model as $d_i$ for some integer $i$.

In our derivations we use the per-example gradient $\partial \stochasticloss_i / \partial w$ of a weight $w$, which is only cleanly defined as part of the batch gradient if there is no interaction between the example computations in a batch. This is the case in most current neural networks if batch normalization is not used.

\paragraph{Biases} Biases in neural networks are a simple vector addition of a bias vector $b \in \reals^{d_1 \times \dots \times d_k}$ to each hidden state $x_i$ in a batched hidden state $x \in \reals^{n \times d_1 \times \dots \times d_k}$. The computation performed with a bias is $x_i' = x_i + b$. We receive the incoming gradient $\partial \stochasticloss / \partial x' \in \reals^{n \times d_1 \times \dots \times d_k}$ from PyTorch\textquotesingle{s} autograd \citep{pytorch}. We simply compute $\langle (\partial \stochasticloss / \partial x')_i, g \rangle$ for all $i \in \{1,\dots,n\}$ sequentially. Biases are usually only small, thus this is not very expensive in general.

\paragraph{Linears} Linears perform matrix multiplications between a batch of incoming hidden vectors $x \in \reals^{n \times d_1}$ and a weight matrix $w \in \reals^{d1 \times d2}$ to yield a new batch of hidden vectors $x' = x \cdot w \in \reals^{n \times d_2}$. The per-example gradient for the $i$-th example can be computed as the outer-product $\partial \stochasticloss_i / \partial w = x_i \cdot (\partial \stochasticloss / \partial x')_i^\T$, where $\partial \stochasticloss / \partial x' \in \reals^{n \times d_2}$ is computed by autograd. 
\begin{lemma}
The dot product between the per-example gradient $\partial \stochasticloss_i / \partial w$ and the given next batch gradient $g$ is $ (x_i^\T \cdot g) \cdot (\partial \stochasticloss / \partial x')_i$.
\end{lemma}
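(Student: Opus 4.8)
The plan is to recognize that the per-example gradient $\partial \stochasticloss_i / \partial w = x_i \cdot (\partial \stochasticloss / \partial x')_i^\T$ is a \emph{rank-one} matrix, and to exploit the fact that the Frobenius inner product of a rank-one matrix against an arbitrary matrix collapses into a pair of matrix-vector contractions. Throughout I read the ``dot product between gradients'' as the Frobenius inner product $\langle A, B \rangle = \Tr(A^\T B) = \sum_{j,k} A_{jk} B_{jk}$, which is exactly the inner product induced by flattening each weight matrix into a vector and concatenating layers into a single model gradient (the setting described just before the lemma). I also fix conventions: $x_i \in \reals^{d_1}$ is a column vector, $(\partial \stochasticloss / \partial x')_i \in \reals^{d_2}$, and the given next-step gradient $g$ lives in $\reals^{d_1 \times d_2}$, the same space as $w$.

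First I would set $u := x_i$ and $v := (\partial \stochasticloss / \partial x')_i$, so that the per-example gradient is $u v^\T$. The whole proof is then the one-line identity $\langle u v^\T, g \rangle = \Tr\bigl((u v^\T)^\T g\bigr) = \Tr(v u^\T g) = \Tr(u^\T g v) = u^\T g v$, where the third equality is the cyclic property of the trace and the last uses that $u^\T g v$ is already a scalar. Substituting $u$ and $v$ back yields $\langle \partial \stochasticloss_i / \partial w, g\rangle = x_i^\T g \,(\partial \stochasticloss / \partial x')_i = (x_i^\T \cdot g)\cdot(\partial \stochasticloss / \partial x')_i$, which is the claimed expression. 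As a redundant but illuminating check I would expand entrywise: $\langle u v^\T, g\rangle = \sum_{j,k} u_j v_k g_{jk}$, while $(u^\T g) v = \sum_k\bigl(\sum_j u_j g_{jk}\bigr) v_k = \sum_{j,k} u_j g_{jk} v_k$, and the two agree term by term. This direct expansion needs no trace machinery and simultaneously confirms that $x_i^\T g \in \reals^{1 \times d_2}$ contracts correctly against $(\partial \stochasticloss / \partial x')_i$ to produce a scalar.

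I do not anticipate any real obstacle: the mathematical content is a standard rank-one Frobenius identity. The only points requiring care are the vector-orientation bookkeeping above and, more importantly, making explicit the efficiency payoff that actually motivates the statement. The right-hand side never materializes the $d_1 \times d_2$ outer product $u v^\T$; it computes the vector $x_i^\T g$ and contracts it with $(\partial \stochasticloss / \partial x')_i$. Since $x_i^\T g$ for all $i$ is just the single matrix product of the stacked inputs with $g$, the alignments for the whole batch are obtained from tensors already exposed by autograd at the cost of one $n \times d_1$ by $d_1 \times d_2$ multiply rather than $n$ separate outer products followed by $n$ full inner products — which is precisely the saving the lemma is meant to justify.
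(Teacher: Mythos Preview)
Your proof is correct and essentially identical to the paper's: both recognize the Frobenius inner product as a trace, substitute the rank-one form $x_i (\partial \stochasticloss/\partial x')_i^\T$, and apply the cyclic property of the trace to collapse everything to the scalar $x_i^\T g\,(\partial \stochasticloss/\partial x')_i$. Your additional entrywise check and efficiency discussion are nice touches but not needed for the argument.
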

\noindent{}The proofs of Lemmas 2 and 3 are given in Appendix \ref{app:proofs}.

Based on Lemma 2, for a given batch we can compute the dot product with the element-wise gradient as $(x \cdot g) \cdot (\partial l / \partial x')^\T$. This way of computing the dot products is only marginally more expensive than a forward pass and requires much less memory than computing the products sequentially.
A similar derivation for computing gradient norms was previously shown by \citet{backpack}.

\paragraph{Convolutions} 
Convolutions are an essential component of most neural network architectures for vision tasks. In the following we will only discuss the single channel case with implicit zero-padding and a stride of one for simplicity, but the analysis extends to many channels, other padding strategies and strides. To introduce our highly optimized implementation, we first remember the operation a convolution with a convolution matrix $K \in \reals^{c_1 \times c_2}$ performs on an input $x \in \reals^{n_1 \times n_2}$. Unlike previously, for this optimization it is enough to consider a single example of size $n_1 \times n_2$ instead of a batch. To simplify the problem, we assume the height and width of the kernel, $c_1$ and $c_2$, to be odd. Since we use zero-padding we assume in the following calculations that out of bounds indexes yield zeros, or equivalently that $x$ has an all zero frame of widths $\lfloor\frac{c_1}{2}\rfloor$ and $\lfloor\frac{c_2}{2}\rfloor$ in dimension 2 and 3. We can now define the convolution operator as 
\begin{align*}
    &x'_{i_1,i_2} = (x * K)_{i_1,i_2} \\
    &= \sum_{j_1=1}^{c_1} \sum_{j_2=1}^{c_2} K_{j_1,j_2} \cdot x_{i_1+j_1-\lfloor\frac{c_1}{2}\rfloor,i_2+j_2-\lfloor\frac{c_2}{2}\rfloor},
\end{align*}
where we denote indexes as function arguments. Further we recall that the gradient of a convolution towards its weight $\frac{\partial \stochasticloss}{\partial K}$ can be defined as
\begin{align*}
    \frac{\partial \stochasticloss}{\partial K_{j_1,j_2}} = \sum_{i_1=1}^{n_1} \sum_{i_2=1}^{n_2} \frac{\partial \stochasticloss}{\partial x'_{i_1,i_2}} \cdot x_{i_1+j_1-\lfloor\frac{c_1}{2}\rfloor,i_2+j_2-\lfloor\frac{c_2}{2}\rfloor}.
\end{align*}
\begin{lemma}
Given the above assumptions on the convolution function, we have that the dot product of a given matrix $g \in \reals^{c_1 \times c_2}$ with each per-example gradient can be written as $\langle g, \frac{\partial \stochasticloss_i}{\partial K}\rangle = \sum_{i_1=1}^{n_1} \sum_{i_2=1}^{n_2} \frac{\partial \stochasticloss}{\partial x'_{i_1,i_2}}\cdot 
    (x * g)_{i_1,i_2}$

\end{lemma}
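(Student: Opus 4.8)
The plan is to prove the identity by a single direct computation, using the fact that $g$ and the weight gradient $\partial \stochasticloss_i/\partial K$ both live in the same $c_1 \times c_2$ kernel space, so that their inner product is just a double sum over the kernel indices $(j_1,j_2)$. First I would expand the left-hand side as
\[
\langle g, \tfrac{\partial \stochasticloss_i}{\partial K}\rangle = \sum_{j_1=1}^{c_1}\sum_{j_2=1}^{c_2} g_{j_1,j_2}\,\frac{\partial \stochasticloss_i}{\partial K_{j_1,j_2}},
\]
and then substitute the weight-gradient formula recalled just above the lemma (taking $x$ to be the single example $i$ under consideration), replacing $\partial \stochasticloss_i/\partial K_{j_1,j_2}$ by $\sum_{i_1,i_2}\frac{\partial \stochasticloss}{\partial x'_{i_1,i_2}}\,x_{i_1+j_1-\lfloor c_1/2\rfloor,\,i_2+j_2-\lfloor c_2/2\rfloor}$.

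The crucial step is then to swap the order of summation, pulling the spatial sums over $(i_1,i_2)$ to the outside and leaving the kernel sums over $(j_1,j_2)$ on the inside. Because every sum ranges over a finite index set this reordering is unconditionally valid, so no convergence or Fubini subtlety arises. After the swap the factor $\frac{\partial \stochasticloss}{\partial x'_{i_1,i_2}}$ factors out in front, and the remaining inner double sum is $\sum_{j_1,j_2} g_{j_1,j_2}\,x_{i_1+j_1-\lfloor c_1/2\rfloor,\,i_2+j_2-\lfloor c_2/2\rfloor}$.

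The final step is to recognize this inner sum. Comparing it term-by-term with the convolution operator $x * K$ defined above the lemma shows that, since $g$ has exactly the same $c_1 \times c_2$ shape as $K$, substituting $g$ for $K$ reproduces this expression verbatim; hence the inner sum is precisely $(x * g)_{i_1,i_2}$, with the floor-based centering and the out-of-bounds-is-zero padding convention carrying over unchanged. Collecting the pieces yields the claimed right-hand side $\sum_{i_1,i_2}\frac{\partial \stochasticloss}{\partial x'_{i_1,i_2}}\,(x*g)_{i_1,i_2}$.

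I expect the only real obstacle to be bookkeeping rather than mathematics: one must keep the kernel indices $(j_1,j_2)$ and the spatial indices $(i_1,i_2)$ cleanly separated through the summation swap, and check that the shifted argument $x_{i_1+j_1-\lfloor c_1/2\rfloor,\dots}$ matches the convolution definition exactly, including the identical centering offsets. Since both sums are finite and $g$ shares the kernel's dimensions, the single-example argument goes through directly; the extension to multiple channels, other strides, and alternative padding promised in the lemma's preamble should then follow by applying the same reindexing independently on each channel and stride slice.
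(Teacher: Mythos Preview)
Your proposal is correct and follows essentially the same route as the paper's proof: expand the Frobenius inner product over kernel indices, substitute the weight-gradient formula, swap the finite sums, and identify the inner $(j_1,j_2)$ sum as $(x * g)_{i_1,i_2}$. The paper presents exactly this three-line computation without further commentary.
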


The new form of the dot product simply consists of an element-wise product between the incoming gradient and the result of using $g$ instead of $K$ in the convolution. Thus, we can compute the dot product between $g$ and the gradient of each individual example in a batch very cheaply. The costs are the same as the forward pass through the convolution except for the final dot product.

\section{Experiments}
\label{experiments}
We performed experiments in two different setups. We will first detail an interpretable and easy-to-reproduce toy experiment, than we will detail the application of the \gar{} of in-loop meta learning of augmentations.

\begin{table}[b]
    \centering
    \scalebox{0.85}{
    \begin{tabular}{llcc}
    \toprule
    Net                        & Method & Noisy Split  & Other Splits\\
    \midrule
    \multirow{2}{*}{FC$_{BN}$} & GAR     & $0.61$ & $0.93$\\
                            {} & NSLR & $0.96$ & $0.89$\\
    \midrule
    \multirow{2}{*}{FC} & GAR     & $0.65$ & $0.93$\\
                     {} & NSLR & $0.86$ & $0.90$\\
    \midrule
    \multirow{2}{*}{CNN$_{BN}$} & GAR     & $0.69$ & $0.92$\\
                            {} & NSLR  & $0.92$ & $0.90$\\
    \bottomrule
    \end{tabular}
    }
    \caption{Average AUC for the split probability over $10$ epochs of (non-)noisy splits for different architectures.}
    \label{tab:toy_averaged}
\end{table}

\begin{figure*}[t]
    \begin{subfigure}{.32\linewidth}
    \includegraphics[width=\linewidth]{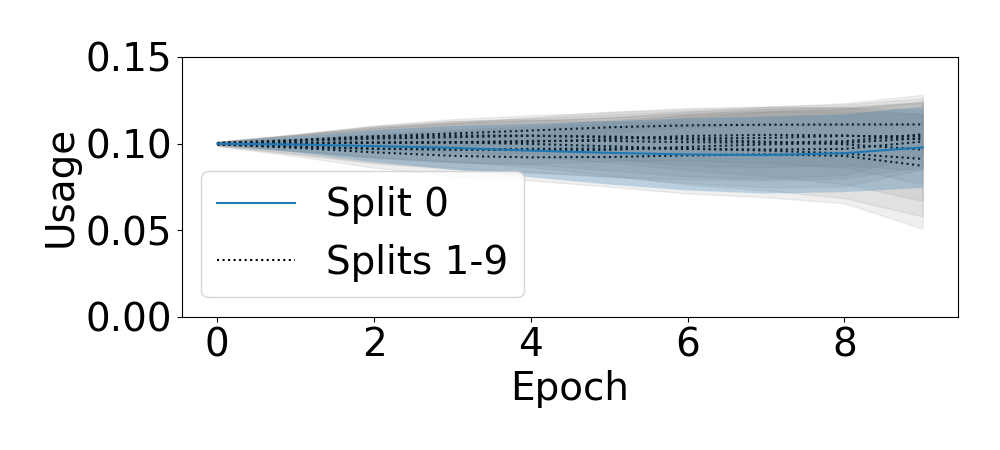}
    \subcaption{}
    \label{nslrbaseline}
    \label{baseline}
    \end{subfigure}
    \begin{subfigure}{.32\linewidth}
    \includegraphics[width=\linewidth]{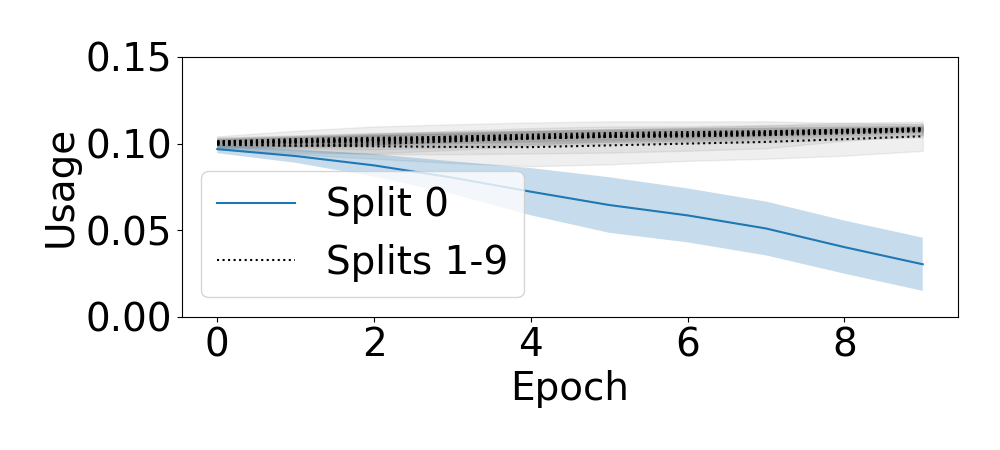}
    \subcaption{}
    \label{gare}
    \end{subfigure}
    \begin{subfigure}{.32\linewidth}
    \includegraphics[width=\linewidth]{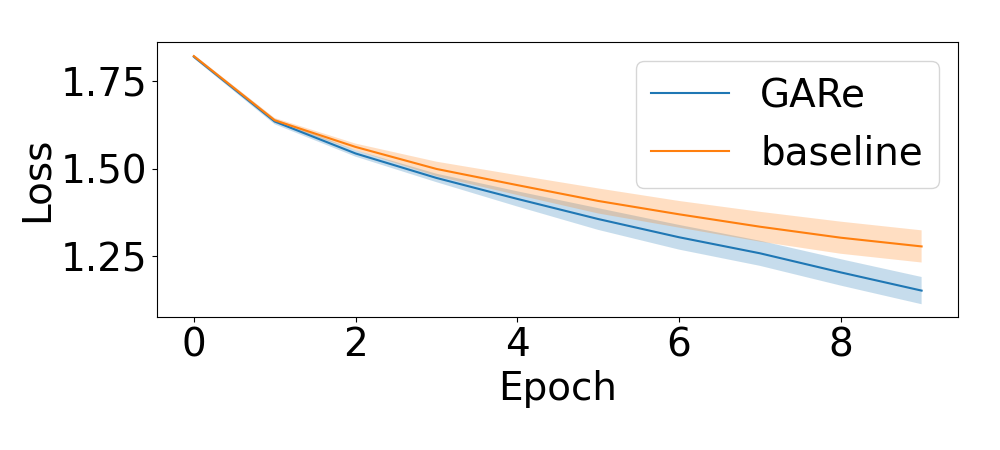}
    \subcaption{}
    \label{losses}
    \end{subfigure}
    \caption{Average split distribution of the NSLR baseline (\protect\subref{baseline}) and \gar{} (\subref{gare}) across epochs. Training losses are given in (\subref{losses}).}

    \label{fig:toyexpfigue}
\end{figure*}

\subsection{Illustrative Example: Batch Sampling Distribution}
We perform a motivational experiment on a small toy task. We split the CIFAR-10 dataset into 10 equally sized parts and on the zeroth part we replace all labels with labels drawn uniformly at random. Therefore, on split 0 most labels are wrong. Things learned from split 0 will, thus, generalize badly to other splits. For each example in an SGD batch, we first sample the split from a learned distribution and then uniformly sample from within that split.
We make experiments with three networks: a fully-connected network with a single hidden layer of size 200 (FC), the same fully-connected network but with BatchNorm \citep{ioffe-icml15} (FC$_{BN}$) and a small CNN (CNN$_{BN}$) with a single convolution of size 3 from the input to a single channel, followed by BatchNorm and a linear layer. In all networks we use ReLU activations between layers.
We use a batch size of 1000, train for 10 epochs and apply an SGD optimizer with Nesterov momentum of 0.9, a .0005 L$_2$ regularization and a fixed learning rate of 0.1.
As for the meta optimization, we parameterize the distribution over splits by an "inverted" softmax distribution generated from learned logits $s$, that is $p(s)_i \sim 1 - \softmax(s)_i$. This is useful compared to other distributions, since we saw that, while softmax has a bias towards a single winner, this has a tendency towards a single loser.
To train $s$ online with \gar{} we use policy gradient. We aggregate gradients over 10 steps, normalize the rewards for each step and use Adam \cite{adam} with a learning rate of 0.1.
To compare our method fairly. We compare it to NSLR, as proposed in section \ref{section:gradalignreward}, which is much simpler, but still novel. While there is only one NSLR reward per step, this baseline yields a bias free estimator of the true stochastic meta-gradient. We performed all experiments with 10 different seeds.
In Figure \ref{fig:toyexpfigue} we show the allocation of $p(s)$ over time for \gar{} and the NSLR, as well as training losses over time, the usage of each bucket, for the fully-connected network with batch norm. We can see that \gar{} presses down the noisy category with little variance in Figure \ref{gare}, while the baseline does not succeed in omitting the noisy data, as we see in Figure \ref{nslrbaseline}. In Figure \ref{losses} we see how this impacts training losses of the two setups.
We could see similar results for the other setups and show the area under the curve (AUC) of the split probability (the usage) for these in Table \ref{tab:toy_averaged}. An optimal method would show a low AUC for the noisy split, but a high AUC for the other splits. It would, thus omit training on noisy data, but not omit training on any of the non-noisy splits.
To experiment with different networks, setups or add noise to the images instead of the labels, we refer to our public colab notebook\footnote{\url{https://bit.ly/34K7aAT}} generating these experiments with no setup.

\begin{table}[h]
\setlength\tabcolsep{2pt}
\centering
\footnotesize
\begin{tabular}{l|cccc|cc}
  & PBA & Fast AA & AA  &  RA & OLA (WL) & OLA (RA)\\
  \hline 
  \textbf{CIFAR-10} &&&&&\\
   UA Baseline & - & - & - & - &97.61 $\pm$ 0.16 & 97.51 $\pm$ 0.18\\
   Method & 97.4 & 97.3 & 97.4 & 97.3 & 97.39 $\pm$ 0.15 & 97.56 $\pm$ 0.07 \\ 
  \hline
  \textbf{CIFAR-100} &&&&&&\\
    UA Baseline & - & - & - & - &83.20 $\pm$ 0.34&83.40 $\pm$ 0.09\\
   Method & 83.3 & 82.7 & 82.9 & 83.3 & 84.30 $\pm$ 0.40 & 83.54 $\pm$ 0.14 \\ 
  \hline  
\end{tabular}
\vspace{0.2cm}
\caption{Results of the OLA experiments. The results are test accuracies and the 95\% confidence interval is noted with $\pm$.}
\label{tab:aug_results}  
\end{table}

\subsection{Online-Learned Augmentation-Strategy}
The choice of image augmentation was shown to have enough impact to improve performance considerably \cite{autoaugment}. We apply the \gar{} to learn the augmentation policy online. We refer to this method as \emph{Online-Learned Augmentation-Strategy (OLA)}.
We use a setup inspired by RandAugment \citep{randaugment}. We have a set of augmentations $\mathcal{A}$ and for each image we sample a learned number $r \in \{1,\dots,4\}$ of augmentations $a_1,\dots,a_r \in \mathcal{A}$ uniformly without replacement. $r$ itself is sampled from a learned distribution $p(r) = \softmax(l^{(r)})$.
We apply each augmentation $a_i$ in sampling order to the example image, each with a sampled strength $k_{a_i} \in \{0,\dots,30\}$. $k_{a_i}$ is sampled from the distribution $p(k_{a_i}) = \softmax(l^{(a)}_{a_i})$, which depends on the applied augmentation $a_i$.
Each sampled augmentation $a_i$ is actually applied with a learned probability $p(d_{a_i}) = \sigma(l^{(d)}_{a_i})$. A sampled augmentation might thus not be applied after all.
In the above all logits $l^{(r)} \in \reals^{4}$, $l^{(a)} \in \reals^{|\mathcal{A}| \times 31}$ and $l^{(d)} \in \reals^{|\mathcal{A}|}$ are learned weights and initialized to zero.
We outline the augmentation sampling process in Algorithm~\ref{augmentationsampling}.

\begin{algorithm}[tb]
\begin{algorithmic}[1]
\STATE receive sample image $x$
\STATE sample the number of augmentations $r \sim p(r)$
\STATE sample the augmentations to apply $a_1,\dots,a_r$ uniformly at random without replacement from $\mathcal{A}$
\FOR{$i \in \{1,\dots,r\}$}
\STATE sample keep indicator $d_{a_i} \sim p(d_{a_i})$
\IF{$d_{a_i}$}
\STATE sample strength $k_{a_i} \sim p(k_{a_i})$
\STATE apply the augmentation $x \gets a_i(x,k)$
\ENDIF
\ENDFOR
\end{algorithmic}
\caption{OLA Augmentation Sampling Procedure}
\label{augmentationsampling}
\end{algorithm}

We perform our evaluations on CIFAR-10 and CIFAR-100 using a WideResnet-28-10 \citep{wrn} and follow the setup of \citet{randaugment} in detail, but we use a larger batch size of 256. We keep the number of epochs stable, though, such that we do not have an unfair advantage. To prevent the augmentation distribution from collapsing we interleave steps with learned augmentations with steps that do not use augmentations. Such that we consider alignments of augmented examples with a non-augmented batch. For maximal efficiency we use the gradients computed on non-augmented batches not only to compute the \gar{}, but treat them as normal steps in the main training and update the neural network with them.
Like before we use Adam for the meta optimization. We set the learning rate of Adam to $0.1$, normalize the rewards in each batch and do not aggregate meta gradients.

\begin{algorithm}[tb]
\begin{algorithmic}[1]
\STATE receive sample image $x$
\STATE sample the augmentations to apply $a_1$ and $a_2$ uniformly at random without replacement from $\mathcal{A}$
\FOR{$i \in \{1,2\}$}
\STATE sample keep indicator $d \sim \text{Bern}(0.5)$
\IF{$d$}
\STATE sample strength $k$ uniformly at random from $\{0,\dots,30\}$
\STATE apply the augmentation $x \gets a_i(x,k)$
\ENDIF
\ENDFOR

\end{algorithmic}
\caption{UA Augmentation Sampling Procedure}
\label{uareimp}

\end{algorithm}

We compare OLA with a set of common augmentation strategies, most of which have to pre-trained. We compare with Population based Augmentation \citep[PBA;][]{ho2019population}, AutoAugment~\citep[AA;][]{autoaugment}, Fast AA \citep{fastaa} and RandAugment~\citep[RA;][]{randaugment}.
So far in the literature little attention was given to the search space. Most previous work use slightly different search spaces. PBA, AA and RandAugment all have slightly different search spaces for example. In our experiments we found the search space choice to be important. 
We reimplemented UniformAugment~ \citep[UA;][]{uniformaugment} and found that surprisingly in our setup we could considerably improve the performance of our reimplementation of UA depending on the search space, something we were not able to do for RA. Therefore, unlike previous work, we provide our reimplementation of UA as an additional baseline, since it was evaluated under the exact same settings. Algorithm \ref{uareimp} outlines our reimplementation of UA. We made two main changes: We sample strengths $k$ from a range of integer values instead of a real-valued range, to align with our comparisons, and we sample the augmentations without replacement, which makes the set of applied augmentations more diverse. Other than that, we removed the double sampling of the keep probability and replaced with an equivalent single sampling.

In Table \ref{tab:aug_results} we show the average test accuracies over 5 runs of our method and the UA baselines with confidence bounds and the comparisons from literature. We evaluated our method on two different search spaces, which we denote in parentheses for our evaluations. The search space RA is equivalent to the search space used for RandAugment, while WideLong (WL) is a search space that includes more extreme strength settings and more augmentations, see Table \ref{tab:WLsearchspace}. While our method and its baseline perform well in comparison to previous methods, the comparison we want to focus on is the comparison with our reimplementation of UA, since we share all setup with it. For CIFAR-10 one can see here, that we are performing similar to the strong baselines for both search spaces. For CIFAR-100 we outperform the baselines in both cases by at least the 95\% confidence interval. Note, that unlike previous methods we do not pre-train or do hyper-parameter search on a per-dataset basis. 
Our results point out that over different setups our method works either comparable or better than previous methods and the UA baseline, while other methods like AA, are expensively pre-trained for each dataset. Also worthy of mention is how well the UA baseline performs, even for the larger search space, compared to learned methods.

\section{Conclusion}
We presented a way of adapting meta parameters $\phi$ online during SGD training of the elementary network parameters $\theta$ that is efficient in terms of data, memory and time and applies to optimizing non-differentiable distributions during training. Key to our approach is the Gradient Alignment Reward, which allows using reinforcement learning with unique per-sample rewards.
We showed its benefits on an interpretable toy task and a real world task. This method has many potential future applications like large scale learned curricula or neural architecture search.



\paragraph*{\textsc{Acknowledgements}} The authors acknowledge funding by the Robert Bosch GmbH.%

\bibliography{bib/shortstrings,iclr2020/iclr2020_conference.bib,bib/lib,bib/shortproc}

\appendix
\section{Proofs of Lemmas 2 and 3}\label{app:proofs}
\begin{proof}[Proof of Lemma 2]
\begin{align*}
    \langle \partial \stochasticloss_i / \partial w, g \rangle &= \sum_{k_1 = 1}^{d_1}\sum_{k_2 = 1}^{d_2} (\partial \stochasticloss_i / \partial w)_{k_1,k_2} \cdot g_{k_1,k_2}\\
    &= \Tr ((\partial \stochasticloss_i / \partial w)^\T \cdot g) \\
    &= \Tr ((x_i \cdot (\partial \stochasticloss / \partial x')_i^\T)^\T \cdot g) \\
    &= \Tr (((\partial \stochasticloss / \partial x')_i \cdot x_i^\T) \cdot g) \\
    &= \Tr ((\partial \stochasticloss / \partial x')_i \cdot (x_i^\T \cdot g)) \\
    &= \langle  (\partial \stochasticloss / \partial x')_i, (x_i^\T \cdot g)\rangle.
\end{align*}
\end{proof}

\begin{proof}[Proof of Lemma 3]
\begin{align*}
    &\sum_{j_1=1}^{c_1} \sum_{j_2=1}^{c_2} \frac{\partial \stochasticloss}{\partial K_{j_1,j_2}} \cdot g_{j_1,j_2} \\
    &= \sum_{j_1=1}^{c_1} \sum_{j_2=1}^{c_2} \sum_{i_1=1}^{n_1} \sum_{i_2=1}^{n_2} \frac{\partial \stochasticloss}{\partial x'_{i_1,i_2}} \cdot x_{i_1+j_1-\lfloor\frac{c_1}{2}\rfloor,i_2+j_2-\frac{c_2}{2}} \cdot g_{j_1,j_2}\\
    &= \sum_{i_1=1}^{n_1} \sum_{i_2=1}^{n_2} \frac{\partial \stochasticloss}{\partial x'_{i_1,i_2}} \sum_{j_1=1}^{c_1} \sum_{j_2=1}^{c_2}  x_{i_1+j_1-\lfloor\frac{c_1}{2}\rfloor,i_2+j_2-\lfloor\frac{c_2}{2}\rfloor} \cdot g_{j_1,j_2}\\
    &= \sum_{i_1=1}^{n_1} \sum_{i_2=1}^{n_2} \frac{\partial \stochasticloss}{\partial x'_{i_1,i_2}}
    \cdot (x * g)_{i_1,i_2}.
\end{align*}
\end{proof}

\section{WideLong Search Space}
\begin{table}[h!]
\centering
\scalebox{.85}{
    \begin{tabular}{c|c  c|c}
            PIL operation & range &  PIL operation & range \\
\cmidrule(lr){1-2} \cmidrule(lr){3-4}
            identity& - & auto\_contrast& - \\
            equalize& 0.01 - 2.0 & rotate& $-135^{\circ}$ - $135^{\circ}$ \\
            solarize& 0 - 256 & color& 0.01 - 2.0\\
            posterize& 0.01 - 2.0 & contrast& 0.01 - 2. \\
            brightness& 2 - 8 & sharpness& 0.01 - 2.0\\
            shear\_x& 0.0 - 0.99 & shear\_y& 0.0 - 0.99\\
            translate\_x& 0 - 32 & translate\_y& 0 - 32\\
            blur& - & invert& -\\
            flip\_lr& - & flip\_ud& - \\
            cutout& 0 - 19 & & \\
    \end{tabular}
    }
    \caption{The WideLong (WL) search space. All methods are defined as part of Pillow (\url{https://github.com/python-pillow/Pillow}), as part of ImageEnhance, ImageOps or as image attribute, besides cutout \citep{devries2017improved}.}
    \label{tab:WLsearchspace}
\end{table}


\end{document}